\newtheorem{theorem}{Theorem}
\newtheorem{corollary}{Corollary}
\newtheorem{definition}{Definition}
\newcommand{\asure}{\overset{\text{a.s.}}{\longrightarrow}}
\newcommand{\cov}{\operatorname{Cov}}
\newcommand{\var}{\operatorname{Var}}
\newcommand{\alglong}{Incremental Importance Sampling}
\newcommand{\algshort}{INCRIS}
 \definecolor{dark-blue}{rgb}{0,0,0.75}
\title{Using Options and Covariance Testing for Long Horizon Off-Policy Policy Evaluation}
\author{
  Zhaohan Daniel Guo \\
  Carnegie Mellon University \\
  Pittsburgh, PA 15213 \\
  \texttt{zguo@cs.cmu.edu} \\
  \And
  Philip S. Thomas \\
  University of Massachusetts Amherst \\
  Amherst, MA 01003 \\
  \texttt{pthomas@cs.umass.edu} \\
  \And
  Emma Brunskill \\
  Stanford University \\
  Stanford, CA 94305 \\
  \texttt{ebrun@cs.stanford.edu} \\
}
\begin{document} 

\maketitle

\begin{abstract} 
Evaluating a policy by deploying it in the real world can be risky and costly. 
\textit{Off-policy policy evaluation} (OPE) algorithms use historical data collected from running a previous policy to evaluate a new policy, which provides a means for evaluating a policy without requiring it to ever be deployed. 
\textit{Importance sampling} is a popular OPE method because it is robust to partial observability and works with continuous states and actions. 
However, the amount of historical data required by importance sampling can scale exponentially with the \textit{horizon} of the problem: the number of sequential decisions that are made. 
We propose using policies over temporally extended actions, called \textit{options}, and show that combining these policies with importance sampling can significantly improve performance for long-horizon problems.
In addition, we can take advantage of special cases that arise due to options-based policies to further improve the performance of importance sampling. 
We further generalize these special cases to a general covariance testing rule that can be used to decide which weights to drop in an IS estimate, and derive a new IS algorithm called \textit{\alglong} that can provide significantly more accurate estimates for a broad class of domains.
\end{abstract} 

\section{Introduction}


One important problem for many high-stakes sequential decision making under uncertainty domains, including robotics, health care, education, and dialogue systems, is estimating the performance of a new policy  without requiring it to be deployed. To address this, off-policy policy evaluation (OPE) algorithms use historical data collected from executing one policy (called the behavior policy), to predict the performance of a new policy (called the evaluation policy). Importance sampling (IS) is one powerful approach that can be used to evaluate the potential performance of a new policy \citep{precup2000eligibility}. In contrast to model based approaches to OPE \cite{hallak2015off}, importance sampling provides an unbiased estimate of the performance of the evaluation policy. In particular, importance sampling is robust to partial observability, which is often prevalent in real-world domains.
%
%
Unfortunately, importance sampling estimates of the performance of the evaluation policy can be inaccurate when the horizon of the problem is long: the variance of IS estimators can grow exponentially with the number of sequential decisions made in an episode. 
This is a serious limitation for applications that involve decisions made over tens or hundreds of steps, like dialogue systems where a conversation might require dozens of responses, or intelligent tutoring systems that make dozens of decisions about how to sequence the content shown to a student.

Due to the importance of OPE, there have been many recent efforts to improve the accuracy of importance sampling. 
For example, \citet{Dudik2011} and \citet{jiang2016doubly} proposed doubly robust importance sampling estimators that can greatly reduce the variance of predictions when an approximate model of the environment is available. 
\citet{Thomas2016} proposed an estimator that further integrates importance sampling and model-based approaches, and which can greatly reduce mean squared error. 
These approaches trade-off between the bias and variance of model-based and importance sampling approaches, and result in strongly consistent estimators. Unfortunately, in long horizon settings, these approaches will either create estimates that suffer from high variance or exclusively rely on the provided approximate model, which can have high bias.
Other recent efforts that estimate a \textit{value function} using off-policy data rather than just the performance of a policy \cite{NIPS2015_5807,munos2016safe, hallak2015generalized} also suffer from bias if the input state description is not Markovian (such as if the domain description induces partial observability). 
To provide better off policy estimates in long horizon domains, we propose leveraging temporal abstraction. In particular, we analyze using options-based policies (policies with temporally extended actions) \cite{sutton1999between} instead of policies over primitive actions. 
We prove that the we can obtain an exponential reduction in the variance of the resulting estimates, and in some cases, cause the variance to be independent of the horizon. We also demonstrate this benefit with simple simulations. Crucially, our results can be equivalently viewed as showing that using options can drastically reduce the amount of historical data required to obtain an accurate estimate of a new evaluation policy's performance.

We also show that using options-based policies can result in special cases which can lead to significant reduction in estimation error through dropping importance sampling weights. Furthermore, we generalize the idea of dropping weights and derive a covariance test that can be used to automatically determine which weights to drop. We demonstrate the potential of this approach by constructing a new importance sampling algorithm called \alglong~(\algshort) and show empirically that it can significantly reduce estimation error.

\section{Background}
We consider an agent interacting with a Markov decision process (MDP) for a finite sequence of time steps. At each time step the agent executes an action, after which the MDP transitions to a new state and returns a real valued reward. Let $s \in S$ be a discrete state, $a \in A$ be a discrete action, and $r$ be the reward bounded in $[0, R_{\max}]$).

The transition and reward dynamics are unknown and are denoted by the transition probability $T(s' | s, a)$ and reward density $R(r | s, a)$.
A primitive policy maps histories to action probabilities, i.e., $\pi(a_t | s_1,a_1,r_1,\ldots,s_t)$ is the probability of executing action $a_t$ at time step $t$ after encountering history $s_1,a_1,r_1,\ldots,s_t$. The return of a trajectory $\tau$ of $H$ steps is simply the sum of the rewards $G(\tau) = \sum_{t=1}^H r_t$. Note we consider the undiscounted setting where $\gamma=1$. The value of policy $\pi$ is the 
expected return when running that policy: $V_\pi = \mathbb{E}_{\pi}(G(\tau))$.
  
Temporal abstraction can reduce the computational complexity of planning and online learning \cite{sutton1999between,mann2013advantage,mankowitz2014time,brunskill2014pac}. One popular form of temporal abstraction is to use sub-policies, in particular options \citep{sutton1999between}. Let $\Omega$ be the space of trajectories.  $o$, an option, consists of $\pi$, a primitive policy (a policy over primitive actions), $\beta: \Omega \rightarrow [0,1]$, a termination condition  where $\beta(\tau)$ is the probability of stopping the option given the current partial trajectory $\tau \in \Omega$ from when this option began, and $I \subset S$, an input set where $s \in I$ denotes the states where $o$ is allowed to start. Primitive actions can be considered as a special case of options, where the options always terminate after a single step. $\mu(o_t | s_1,a_1,\ldots,s_t)$ denotes the probability of picking option $o_t$ given history $(s_1,a_1,\ldots,s_t)$  when the previous option has terminated, according to options-based policy $\mu$. A high-level trajectory of length $k$ is denoted by $T = (s_1, o_1, v_1, s_2, o_2, v_2, \dots, s_k, o_k, v_k)$ where $v_t$ is the sum of the rewards accumulated when executing option $o_t$.


In this paper we will consider batch, offline, off-policy evaluation of policies for sequential decision making domains using both primitive action policies and options-based policies. We will now introduce the general OPE problem using primitive policies: in a later section we will combine this with options-based policies.

In OPE we assume access to historical data, $D$, generated by an MDP, and a behavior policy ,$\pi_b$. $D$ consists of $n$ trajectories, $\{\tau^{(i)}\}_{i=1}^n$. A trajectory has length $H$, and is denoted by $\tau^{(i)} = (s^{(i)}_1, a^{(i)}_1, r^{(i)}_1, s^{(i)}_2, a^{(i)}_2, r^{(i)}_2, \dots, s^{(i)}_H, a^{(i)}_H, r^{(i)}_H)$. In off-policy evaluation, the goal is to use the data $D$ to estimate the value of an evaluation policy $\pi_e$: $V_{\pi_e}$. As $D$ was generated from running the behavior policy $\pi_b$, we cannot simply use the Monte Carlo estimate. An alternative is to use importance sampling to reweight the data in $D$ to give greater weight to samples that are likely under $\pi_e$ and lesser weight to unlikely ones. We consider per-decision importance sampling (PDIS) \cite{precup2000eligibility}, which gives the following estimate of the value of $\pi_e$: 
\begin{align}
\operatorname{PDIS}(D) &= \frac{1}{n} \sum_{i = 1}^n \left( \sum_{t=1}^H  \rho^{(i)}_t r^{(i)}_t \right) ,&
\rho^{(i)}_t &= \prod_{u=1}^t \frac{\pi_e(a^{(i)}_u | s^{(i)}_u)}{\pi_b(a^{(i)}_u | s^{(i)}_u)},
\end{align}
where $\rho^{(i)}_t$ is the weight given to the rewards to correct due to the difference in distribution. This estimator is an unbiased estimator of the value of $\pi_e$:
\begin{align}
\mathbb{E}_{\pi_e}(G(\tau)) &= \mathbb{E}_{\pi_b}(PDIS(\tau)),
\end{align}
where $\mathbb{E}_{\pi}(\dots)$ is the expected value given that the trajectories $\tau$ are generated by $\pi$.

For simplicity, hereafter we assume that primitive and options-based policies are a function 
only of the current state, but our results apply also when the they are a function of the history. Note that importance sampling does not assume that the states in the trajectory are Markovian, and is thus robust to error in the state representation, and in general, robust to partial observability as well.

\section{Importance Sampling and Long Horizons}
We now show how the amount of data required for importance sampling to obtain a good off-policy estimate can scale exponentially with the problem horizon. Notice that in the standard importance sampling estimator, the weight is the product of the ratio of action probabilities. We now prove that this can cause the variance of the policy estimate to be exponential in $H$.\footnote{These theorems can be seen as special case instantiations of Theorem 6 in \citep{pmlr-v38-li15b} with simpler, direct proofs.}

\begin{theorem}
The mean squared error of the PDIS estimator can be $\Omega(2^H)$. {\bf Proof.} See appendix.
\label{thm:mse_h}
\end{theorem}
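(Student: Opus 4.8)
The plan is to prove the bound by exhibiting a single, deliberately simple problem instance on which the variance of the PDIS estimator grows like $2^H$. Since the excerpt already establishes that PDIS is unbiased, its mean squared error equals its variance, so a variance lower bound is exactly an MSE lower bound and no bias term needs to be tracked.

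First I would construct a minimal MDP: a single state $s$ with a self-loop, two actions $a_1,a_2$, and a deterministic reward of $1$ collected only on the final ($H$-th) step, with all earlier rewards equal to $0$. I would set the evaluation policy to be deterministic, $\pi_e(a_1\mid s)=1$, and the behavior policy to be uniform, $\pi_b(a_1\mid s)=\pi_b(a_2\mid s)=\tfrac12$. This is a legal instance of the model: the reward lies in $[0,R_{\max}]$ for any $R_{\max}\ge 1$, and since that reward is collected with certainty we have $V_{\pi_e}=1$.

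Next I would compute the per-trajectory weight. Each step contributes a factor $\pi_e/\pi_b$ equal to $2$ when the logged action is $a_1$ and $0$ when it is $a_2$, so $\rho^{(i)}_H = 2^H$ on the single action sequence consisting entirely of $a_1$ (an event of probability $2^{-H}$ under $\pi_b$) and $\rho^{(i)}_H=0$ otherwise. Because only $r_H$ is nonzero, the single-trajectory estimate collapses to $X=\rho^{(i)}_H r^{(i)}_H = 2^H\,\mathbf{1}[\text{all actions are } a_1]$. A direct second-moment calculation then yields $\mathbb{E}_{\pi_b}[X]=2^{-H}\cdot 2^H=1=V_{\pi_e}$ (reconfirming unbiasedness) and $\mathbb{E}_{\pi_b}[X^2]=2^{-H}\cdot 2^{2H}=2^H$, hence $\var_{\pi_b}(X)=2^H-1$.

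Finally, since the $n$ logged trajectories are i.i.d., the estimator's variance is $(2^H-1)/n$, so treating $n$ as fixed gives an MSE of $\Omega(2^H)$, establishing the claim. I do not expect a genuine technical obstacle, as this is a worst-case lower bound proved by explicit construction; the only points requiring care are verifying that the constructed instance respects every modelling assumption, and stating precisely that the $\Omega(2^H)$ scaling is a statement about the second moment of a single importance-weighted return (equivalently, about fixed sample size $n$) rather than a claim that additional data cannot reduce the error.
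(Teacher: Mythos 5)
Your argument has exactly the same skeleton as the paper's proof: PDIS is unbiased, so MSE equals variance; one then exhibits an instance with a deterministic evaluation policy and a uniform random behavior policy so that a single action sequence carries weight $2^H$ and occurs with probability $2^{-H}$, giving a per-trajectory second moment of $2^H$ and variance $2^H-1$, hence $(2^H-1)/n$ over $n$ i.i.d.\ trajectories. These are precisely the quantities computed in the paper's appendix.

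However, your concrete instance is not a legal member of the model class, and this is exactly the point you flagged as ``requiring care.'' The paper defines the reward dynamics as a density $R(r \mid s,a)$ depending only on the current state and action, so with a single self-looping state you cannot have reward $0$ on steps $1,\dots,H-1$ and reward $1$ on step $H$: the reward distribution would have to be identical at every step, since the MDP has no way to know the time. The standard repair is to unroll time into the state space, i.e., a chain $x_1,\dots,x_{H+1}$ in which $a_1$ advances along the chain and reward $1$ is given only for taking $a_1$ at $x_H$ --- which is exactly the paper's construction, where the second (bottom) chain of states $y_1,\dots,y_H$ exists merely to give action $a_2$ somewhere to transition to. With that repair your computation goes through verbatim (the all-$a_1$ sequence still has weight $2^H$ and probability $2^{-H}$, and all other trajectories contribute zero), so the gap is one of model-compliance of the construction rather than of the probabilistic argument itself.
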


Equivalently, this means that achieving a desired mean squared error of $\epsilon$ can require a number of trajectories that scales exponentially with the horizon. A natural question is whether this issue also arises in a weighted importance sampling \citep{precup2001off}, a popular (biased) approach to OPE that has lower variance. We show below that the long horizon problem still persists.

\begin{theorem}
It can take $\Omega(2^H)$ trajectories to shrink the MSE of weighted importance sampling (WIS) by a constant. {\bf Proof.} See appendix.
\end{theorem}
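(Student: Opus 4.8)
The plan is to exhibit a single family of domains, indexed by the horizon $H$, on which the WIS estimator remains essentially useless until the number of trajectories $n$ reaches order $2^H$. First I would construct a simple MDP with horizon $H$ in which each state has exactly two available actions. I would take the behavior policy $\pi_b$ to be uniform (each action with probability $1/2$) and the evaluation policy $\pi_e$ to be deterministic, always selecting one designated action at every step. The rewards would be made deterministic and arranged so that the unique trajectory consistent with $\pi_e$ accumulates total return $c$ for some fixed constant $c>0$ (e.g.\ a terminal reward placed at the $\pi_e$-leaf), while every other trajectory accumulates return $0$. Note that because $\pi_b$ has full support, the usual coverage condition for importance sampling holds, so $\pi_e$ may legitimately be deterministic.

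The key structural observation is that, because $\pi_e$ is deterministic, the trajectory weight $w^{(i)} = \prod_{u=1}^H \pi_e(a_u^{(i)}|s_u^{(i)})/\pi_b(a_u^{(i)}|s_u^{(i)})$ vanishes for every trajectory that ever deviates from the $\pi_e$-path, since such a trajectory contains at least one factor $\pi_e(a|s)=0$. Only the single trajectory that matches $\pi_e$ at every step carries nonzero weight, and that weight equals $2^H$. Hence I would let $K$ denote the number of sampled trajectories that exactly follow the $\pi_e$-path; since each sampled trajectory follows that path independently with probability $(1/2)^H = 2^{-H}$, we have $K \sim \mathrm{Binomial}(n, 2^{-H})$.

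Next I would evaluate the WIS estimate in the two cases. When $K \ge 1$, the zero-weight trajectories drop out of both the numerator and denominator, and the surviving terms all share weight $2^H$ and return $c$, so WIS returns exactly $c$ and incurs zero error. When $K = 0$ all weights are zero; under the standard convention the estimator returns its default value, and since every observed return is $0$ by construction this default (whether a fixed $0$ or the unweighted Monte Carlo average) is $0$, incurring squared error $c^2$. Combining the two cases gives the clean identity $\mathrm{MSE} = \Pr(K=0)\,c^2 = (1-2^{-H})^n\,c^2$. To reduce this below, say, $c^2/2$ one needs $(1-2^{-H})^n \le 1/2$, i.e.\ $n \ge \ln 2 / \bigl(-\ln(1-2^{-H})\bigr) = \Omega(2^H)$, which establishes the claim.

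The main obstacle I anticipate is not the combinatorics but pinning down the behavior of WIS in the degenerate $K=0$ regime, where the estimator is formally $0/0$; designing the off-path returns to be identically $0$ is the device that makes the conclusion robust to any reasonable tie-breaking convention. A secondary point requiring care is the precise reading of ``shrink the MSE by a constant'': the construction shows the MSE is pinned at essentially $c^2$ for all $n = o(2^H)$, so any fixed additive or multiplicative improvement over this baseline forces $n = \Omega(2^H)$, and I would state the constant-factor version explicitly to avoid ambiguity.
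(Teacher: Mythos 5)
Your proposal is correct and takes essentially the same route as the paper's proof: the same uniform-behavior versus deterministic-evaluation construction in which only the single $\pi_e$-consistent trajectory carries nonzero weight ($2^H$), so WIS is exactly right whenever that trajectory appears in the data and $0$ otherwise, giving MSE proportional to $(1-2^{-H})^n$ and hence $n = \Omega(2^H)$ to shrink it by a constant. The only differences are cosmetic: the paper resolves the degenerate all-zero-weights case by a limiting argument (letting $\pi_e$ pick $a_1$ with arbitrarily high probability and defining WIS to be $0$ there) rather than your zero-off-path-returns device, and it argues via the bias term rather than computing the MSE exactly as you do.
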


\section{Combining Options and Importance Sampling}

We will show that one can leverage the advantage of options to mitigate the long horizon problem. If the behavior and evaluation policies are both options-based policies, then the PDIS estimator can be exponentially more data efficient compared to using primitive behavior and evaluation policies.
 
Due to the structure in options-based policies, we can decompose the difference between the behavior policy and the evaluation policy in a natural way. Let $\mu_b$ be the options-based behavior policy and $\mu_e$ be the options-based evaluation policy. First, we examine the probabilities over the options. The probabilities $\mu_b(o_t | s_t)$ and $\mu_e(o_t | s_t)$ can differ and contribute a ratio of probabilities as an importance sampling weight. Second, the underlying policy, $\pi$, for an option, $o_t$, present in both $\mu_b$ and $\mu_e$ may differ, and this also contributes to the importance sampling weights. Finally, additional or missing options can be expressed by setting the probabilities over missing options to be zero for either $\mu_b$ or $\mu_e$. Using this decomposition, we can easily apply PDIS to options-based policies.

\begin{theorem}
\label{thm:main}

Let $\mathcal{O}$ be the set of options that have the same underlying policies between $\mu_b$ and $\mu_e$. Let $\overline{\mathcal{O}}$ be the set of options that have changed underlying policies. Let $k^{(i)}$ be the length of the $i$-th high level trajectory from data set $D$. Let $j^{(i)}_t$ be the length of the sub-trajectory produced by option $o^{(i)}_t$. The PDIS estimator applied to $D$ is

\begin{align}
PDIS(D) &= \frac{1}{n} \sum_{i = 1}^n \left( \sum_{t=1}^{k^{(i)}}  w^{(i)}_t y^{(i)}_t \right)  & w^{(i)}_t &= \prod_{u=1}^t \frac{\mu_e(o^{(i)}_u | s^{(i)}_u)}{\mu_b(o^{(i)}_u | s^{(i)}_u)},
 \\
y^{(i)}_t &= \begin{cases}
v^{(i)}_t \text{ if } o^{(i)}_t \in \mathcal{O} \\
\sum_{b=1}^{j^{(i)}_t}  \rho^{(i)}_{t,b} r^{(i)}_{t,b} \text{ if } o^{(i)}_t \in \overline{\mathcal{O}}
\end{cases} & \rho^{(i)}_{t,b} &= \prod_{c=1}^{j^{(i)}_t} \frac{\pi_e(a^{(i)}_{t,c} | s^{(i)}_{t,c}, o^{(i)}_t)}{\pi_b(a^{(i)}_{t,c} | s^{(i)}_{t,c}, o^{(i)}_t)},
 \end{align}
where $r^{(i)}_{t,b}$ is the $b$-th reward in the sub-trajectory of option $o^{(i)}_t$ and similarly for $s$ and $a$.
\end{theorem}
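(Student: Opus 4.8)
The plan is to derive the stated estimator by substituting the options-based trajectory likelihood into the primitive per-decision importance sampling (PDIS) estimator and then collecting terms, exactly as the preceding decomposition suggests. First I would make the generative model explicit: an options-based policy $\mu$ with within-option policies $\pi(\cdot\mid\cdot,o)$ induces a semi-Markov factorization of the probability of a primitive trajectory, namely a product of option-selection factors $\mu(o_t\mid s_t)$ over the high-level steps $t=1,\dots,k^{(i)}$ and within-option action factors $\pi(a_{t,c}\mid s_{t,c},o_t)$ over the primitive steps $c=1,\dots,j^{(i)}_t$ inside each option. The likelihood ratio of any trajectory therefore factors into a product of option-selection ratios $\mu_e(o_u\mid s_u)/\mu_b(o_u\mid s_u)$ and within-option action ratios $\pi_e(a_{u,c}\mid s_{u,c},o_u)/\pi_b(a_{u,c}\mid s_{u,c},o_u)$.

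Next I would rewrite the primitive PDIS estimator by re-indexing each primitive time step as a pair $(t,b)$, where $t$ is the index of the option active at that step and $b$ is the position within that option, so that $r^{(i)}_{t,b}$ is the $b$-th reward produced by option $o^{(i)}_t$. By the definition of PDIS, the weight multiplying $r^{(i)}_{t,b}$ is the product of all probability ratios up to and including that step. Using the factorization, this cumulative weight splits into the option-selection ratios for options $1,\dots,t$, which is precisely $w^{(i)}_t$, together with the within-option action ratios accumulated so far. The key cancellation is that for every option $o_u\in\mathcal{O}$ the underlying policy is unchanged, so $\pi_e(\cdot\mid\cdot,o_u)=\pi_b(\cdot\mid\cdot,o_u)$ and all of its within-option ratios equal $1$ and drop out; only the within-option ratios coming from options in $\overline{\mathcal{O}}$ survive.

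Finally I would collect the surviving terms option by option. For an option $o^{(i)}_t\in\mathcal{O}$, the weight on each of its rewards reduces to the single factor $w^{(i)}_t$ with no within-option correction, so I can pull $w^{(i)}_t$ out and sum the rewards of that option to obtain $w^{(i)}_t v^{(i)}_t$, giving $y^{(i)}_t=v^{(i)}_t$. For an option $o^{(i)}_t\in\overline{\mathcal{O}}$, the within-option ratios must be retained per reward, producing $w^{(i)}_t\sum_{b} \rho^{(i)}_{t,b} r^{(i)}_{t,b}$ and hence the second branch of $y^{(i)}_t$. Summing over $t$ and averaging over the $n$ trajectories reproduces the claimed estimator, and unbiasedness is inherited directly from the unbiasedness of primitive PDIS established earlier, since the construction only rewrites the same estimator.

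I expect the main obstacle to be the bookkeeping in the re-indexing step: one has to align each primitive time index with its $(t,b)$ pair across options of varying lengths and verify that the cumulative per-decision weight on $r^{(i)}_{t,b}$ reduces to exactly $w^{(i)}_t$ times the surviving within-option factor. In particular the cancellation for $\mathcal{O}$-options and the careful accounting of precisely which within-option ratios from preceding options remain are where the real content lies; once this indexing is pinned down, the two-case definition of $y^{(i)}_t$ follows mechanically and the identity $\mathbb{E}_{\mu_e}(G(\tau))=\mathbb{E}_{\mu_b}(\mathrm{PDIS}(\tau))$ carries over unchanged.
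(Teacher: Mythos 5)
Your route is the same one the paper takes---its entire proof is the single sentence that the theorem ``is a straightforward application of PDIS to the options-based policies using the decomposition mentioned,'' and your factorize-and-reindex plan is the natural expansion of that sentence. However, your execution has a genuine gap, and it sits exactly at the step you yourself flag as ``where the real content lies.''

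After the cancellation for options in $\mathcal{O}$, the surviving factors in the cumulative per-decision weight on $r^{(i)}_{t,b}$ are not only $w^{(i)}_t$ and the ratios internal to $o^{(i)}_t$: they also include the within-option products $\prod_{c}\pi_e(a^{(i)}_{u,c}\mid s^{(i)}_{u,c},o^{(i)}_u)/\pi_b(a^{(i)}_{u,c}\mid s^{(i)}_{u,c},o^{(i)}_u)$ of every \emph{earlier} changed option $o^{(i)}_u\in\overline{\mathcal{O}}$ with $u<t$. The per-decision argument only licenses discarding ratios of actions taken \emph{after} the reward in question; these are ratios of \emph{past} actions, and they do not cancel, because the actions inside an earlier changed option determine its termination state and hence the distribution of everything downstream. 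So a careful execution of your re-indexing yields an estimator carrying those cross-option factors on later rewards, which is not the displayed formula: the displayed formula is obtained from it only by additionally dropping those factors, and by the paper's own covariance identity \eqref{eq:covtest} such dropping incurs a bias $\cov(W_1, W_2 r)$ that vanishes only under extra structure (e.g., the earlier changed option is stationary in the sense of Definition 1, which is precisely what Theorem \ref{thm:special} addresses; the case $\overline{\mathcal{O}}=\emptyset$ of Corollary \ref{cor:case1} is unaffected). Hence your concluding claim that unbiasedness is ``inherited directly'' because ``the construction only rewrites the same estimator'' does not go through as written: either the formula must retain the cross-option factors, or you must supply the covariance/stationarity argument for discarding them. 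A separate minor mismatch: the paper defines $\rho^{(i)}_{t,b}$ with the product running over $c=1,\dots,j^{(i)}_t$ (ordinary IS within the option), whereas your ``retained per reward'' reading needs $c=1,\dots,b$; only the latter is genuinely per-decision, so the two should be reconciled rather than conflated.
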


\begin{proof}
This is a straightforward application of PDIS to the options-based policies using the decomposition mentioned.
\end{proof}

Theorem \ref{thm:main} expresses the weights in two parts: one part comes from the probabilities over options which is expressed as $w^{(i)}_t$, and another part comes from the underlying primitive policies of options that have changed with $\rho^{(i)}_{t,b}$. We can immediately make some interesting observations below.

\begin{corollary}
\label{cor:case1}
If no underlying policies for options are changed between $\mu_b$ and $\mu_e$, and all options have length at least $J$ steps, then the worst case variance of PDIS is exponentially reduced from $\Omega(2^H)$ to $\Omega(2^{(H/J)})$
\end{corollary}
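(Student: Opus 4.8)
The plan is to reduce the options-based estimator to the primitive PDIS estimator analyzed in Theorem~\ref{thm:mse_h}, but with an effective horizon shrunk by a factor of $J$. First I would specialize Theorem~\ref{thm:main} to the present hypotheses: since no underlying policies change, the set $\overline{\mathcal{O}}$ is empty and $\mathcal{O}$ contains every option, so $y^{(i)}_t = v^{(i)}_t$ and the primitive-action source of importance weights vanishes entirely. The estimator therefore collapses to $PDIS(D) = \frac1n \sum_{i=1}^n \sum_{t=1}^{k^{(i)}} w^{(i)}_t v^{(i)}_t$ with $w^{(i)}_t = \prod_{u=1}^t \mu_e(o^{(i)}_u \mid s^{(i)}_u)/\mu_b(o^{(i)}_u \mid s^{(i)}_u)$.

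The key observation is that this expression is formally identical to the primitive PDIS estimator, with options $o_t$ playing the role of primitive actions, the aggregated option returns $v_t$ playing the role of per-step rewards, and the high-level trajectory length $k$ playing the role of the horizon. Next I would invoke the length-$J$ assumption: since each executed option consumes at least $J$ primitive steps while a trajectory spans $H$ primitive steps, every high-level trajectory contains at most $k \le H/J$ options. Hence the number of likelihood-ratio factors in each product $w^{(i)}_t$ is bounded by $H/J$ rather than $H$.

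With the estimator recast in this form, I would re-run the worst-case construction behind Theorem~\ref{thm:mse_h}, now at the level of the high-level (semi-Markov) process over options. Concretely, I would exhibit an instance in which every option has length exactly $J$ (so $k = H/J$) and the option-selection policies $\mu_b$ and $\mu_e$ reproduce, at the option level, the same disagreement that drives the $\Omega(2^H)$ blow-up for primitives: each option contributes a ratio bounded away from $1$, so the product $w^{(i)}_k$ concentrates on an event of exponentially small probability while taking an exponentially large value, yielding variance $\Omega(2^{k}) = \Omega(2^{H/J})$. Comparing against the $\Omega(2^H)$ of Theorem~\ref{thm:mse_h} gives the claimed exponential reduction in the exponent.

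The main obstacle I anticipate is the last step: the worst-case example must be realized with genuine options rather than primitives, so I must construct an explicit semi-MDP, an option set whose members each run for at least $J$ steps, and behavior/evaluation option policies whose per-option ratios reproduce the bad construction. A secondary subtlety is that the $v_t$ are aggregated returns rather than single rewards, so I would choose the reward structure (for example a single terminal payoff, or a constant per-step reward accumulated within each option) so that the aggregation does not dampen the variance lower bound. Once the option-level instance is in place, the variance computation is structurally identical to that of Theorem~\ref{thm:mse_h} with $H$ replaced by $H/J$.
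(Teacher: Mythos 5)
Your proposal takes essentially the same route as the paper: specialize Theorem~\ref{thm:main} so that $\overline{\mathcal{O}}$ is empty and only the option-level weights $w^{(i)}_t$ survive, observe that each high-level trajectory has length at most $H/J$, and conclude that the effective horizon (and hence the worst-case variance exponent) shrinks from $H$ to $H/J$. The paper stops at this informal observation, so your additional step of explicitly re-running the Theorem~\ref{thm:mse_h} two-chain construction at the option level (with options of length exactly $J$) only makes the $\Omega(2^{H/J})$ lower bound more rigorous than the paper's own treatment.
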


Corollary \ref{cor:case1} follows from Theorem \ref{thm:main}. Since no underlying policies are changed, then the only importance sampling weights left are $w^{(i)}_t$. Thus we can focus our attention only on the high-level trajectory which has length at most $H/J$. Effectively, the horizon has shrunk from $H$ to $H/J$, which results in an exponential reduction of the worst case variance of PDIS. 

\begin{corollary}
\label{cor:case2}
If the probabilities over options are the same between $\mu_b$ and $\mu_e$, and a subset of options $\overline{\mathcal{O}}$ have changed their underlying policies, then the worst case variance of PDIS is reduced from $\Omega(2^H)$ to $\Omega(2^K)$ where $K$ is an upper bound on the sum of the lengths of the options.
\end{corollary}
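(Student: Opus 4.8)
The plan is to mirror the argument used for Corollary~\ref{cor:case1}, but exploiting the complementary piece of the decomposition in Theorem~\ref{thm:main}. The starting point is the expression for $PDIS(D)$ given there, which cleanly separates the option-selection weights $w^{(i)}_t$ from the primitive-action weights $\rho^{(i)}_{t,b}$. The hypothesis of the corollary is that the option-selection probabilities agree, i.e. $\mu_e(o \mid s) = \mu_b(o \mid s)$ for every option $o$ and state $s$. First I would substitute this into the definition of $w^{(i)}_t$: every factor in the product $\prod_{u=1}^t \mu_e(o^{(i)}_u \mid s^{(i)}_u)/\mu_b(o^{(i)}_u \mid s^{(i)}_u)$ equals one, so $w^{(i)}_t = 1$ for all $i$ and $t$. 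The estimator therefore collapses to $PDIS(D) = \frac{1}{n}\sum_{i=1}^n \sum_{t=1}^{k^{(i)}} y^{(i)}_t$, with no importance weighting at the option level.

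Next I would isolate where any remaining variance inflation can come from. For an option $o^{(i)}_t \in \mathcal{O}$ the term is simply $y^{(i)}_t = v^{(i)}_t$, a bounded sum of rewards carrying no importance ratio; such terms contribute only an $O(1)$ amount (their magnitude is controlled by $R_{\max}$ and the option length, independent of $H$). The only terms that can grow are those for $o^{(i)}_t \in \overline{\mathcal{O}}$, where $y^{(i)}_t = \sum_b \rho^{(i)}_{t,b} r^{(i)}_{t,b}$ and each $\rho^{(i)}_{t,b}$ is a product of primitive-action ratios taken over the sub-trajectory of that single option, i.e. over at most $j^{(i)}_t$ steps. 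Summing the lengths of the changed options gives a total of at most $K$ primitive steps over which products of ratios ever accumulate.

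The conclusion then follows by the same mechanism as in Theorem~\ref{thm:mse_h}: a product of $m$ importance ratios, each bounded away from one, can make the second moment scale like $2^m$ in the worst case, and here $m$ is capped by $K$ rather than by the full horizon $H$. Hence the exponent governing the worst-case variance shrinks from $H$ to $K$, yielding the claimed $\Omega(2^K)$. To make the matching lower bound precise I would reuse the adversarial construction of Theorem~\ref{thm:mse_h}, but confine the policy change to a block of changed options whose lengths sum to $K$, so that the product of ratios over those steps reproduces the $2^K$ growth while the unchanged options leave the remaining steps inert.

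The step I expect to be the main obstacle is controlling the cross-terms in $\var\!\left(\sum_t y^{(i)}_t\right)$: the variance of the sum is not the sum of the per-option variances, and I must argue that the covariances between a changed-option term and the other terms do not reintroduce a dependence on $H$. The cleanest route is to bound each summand's contribution by a product of ratios over its own option only, invoke the boundedness of the rewards, and collect the exponential growth into the single factor $2^K$; establishing that the unchanged-option terms (which carry weight one) cannot amplify this beyond $O(2^K)$ is the delicate part that distinguishes this corollary from the simpler Corollary~\ref{cor:case1}.
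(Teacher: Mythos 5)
Your proposal is correct and takes essentially the same route as the paper: the paper's own (one-paragraph) justification is precisely that equal option-selection probabilities make every $w^{(i)}_t$ equal to one, so the unchanged options can be ``cut out'' of the trajectories and the effective horizon shrinks from $H$ to the total length $K$ of the changed options, with the worst-case $\Omega(2^K)$ scaling inherited from the construction in Theorem~\ref{thm:mse_h}. Your explicit lower-bound construction and your attention to the cross-covariance terms between weighted and unweighted summands are refinements the paper glosses over entirely, but they formalize rather than alter the paper's argument.
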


Corollary \ref{cor:case2} follows from Theorem \ref{thm:main}. The options whose underlying policies are the same between behavior and evaluation can effectively be ignored, and cut out of the trajectories in the data. This leaves only options whose underlying policies have changed, shrinking down the horizon from $H$ to the length of the leftover options. For example, if only a single option of length 3 is changed, and the option appears once in a trajectory, then the horizon can be effectively reduced to just 3. This result can be very powerful, as the reduced variance becomes independent of the horizon $H$.


\section{Experiment 1: Options-based Policies}

This experiment illustrates how using options-based policies can significantly improve the accuracy of importance-sampling-based estimators for long horizon domains. Since importance sampling is particularly useful when a good model of the domain is unknown and/or the domain involves partial observability, we introduce a partially observable variant of the popular Taxi domain \cite{dietterich2000hierarchical} called NoisyTaxi for our simulations (see figure \ref{fig:taxi}).

\subsection{Partially Observable Taxi}

\begin{SCfigure}[4][h]
\label{fig:taxi}
\centering
\includegraphics[scale=0.4]{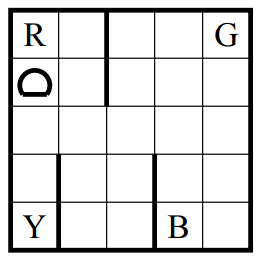}
\caption{Taxi Domain \citep{dietterich2000hierarchical}. It is a $5 \times 5$ gridworld (Figure \ref{fig:taxi}). There are 4 special locations: R,G,B,Y. A passenger starts randomly at one of the 4 locations, and its destination is randomly chosen from one of the 4 locations. The taxi starts randomly on any square. The taxi can move one step in any of the 4 cardinal directions N,S,E,W, as well as attempt to pickup or drop off the passenger. 
Each step has a reward of $-1$. An invalid pickup or dropoff has a $-10$ reward and a successful dropoff has a reward of 20.}
\end{SCfigure}

In NoisyTaxi, the location of the taxi and the location of the passenger is partially observable. If the row location of the taxi is $c$, the agent 
observes $c$ with probability 0.85, $c+1$ with 
probability 0.075 and $c-1$ with probability 
0.075 (if adding or subtracting 1 would cause 
the location to be outside the grid, the resulting 
location is constrained to still lie in the grid). 
The column location of the taxis is observed 
with the same noisy distribution. 
Before the taxi successfully picks up the passenger, the observation of the location of the passenger has a probability of 0.15 of switching randomly to one of the four designated locations. After the passenger is picked up, the passenger is observed to be in the taxi with 100\% probability (e.g. no noise while in the taxi).

\subsection{Experimental Results}


We consider $\epsilon$-greedy option policies, where with 
probability $1-\epsilon$ the policy samples the optimal option, 
and probability $\epsilon$ the policy samples a random option.
Options in this case are $n$-step policies, where ``optimal'' 
options involve taking $n$-steps of the optimal (primitive action) 
policy, and ``random'' options involve taking $n$ random primitive 
actions.\footnote{We have also tried using more standard options that navigate to a specific destination, and the experiment results closely mirror those shown here.} Our behavior policies $\pi_b$ will use $\epsilon=0.3$ and our 
evaluation policies $\pi_e$ use $\epsilon=0.05$. We investigate how 
the accuracy of estimating $\pi_e$ varies as a function both 
of the number of trajectories and the length of the options $n=1,2,3$.
Note $n=1$ corresponds to having a primitive action policy. 

Empirically, all behavior policies have essentially the same performance. Similarly all evaluation policies have essentially the same performance. We first collect data using the behavior policies, and then use PDIS to evaluate their respective evaluation policies.

Figure \ref{fig:graph1} compares the MSE (log scale) of the PDIS estimators for the evaluation policies.

\begin{SCfigure}[2][h]
\label{fig:graph1}
\centering
\includegraphics[scale=0.3]{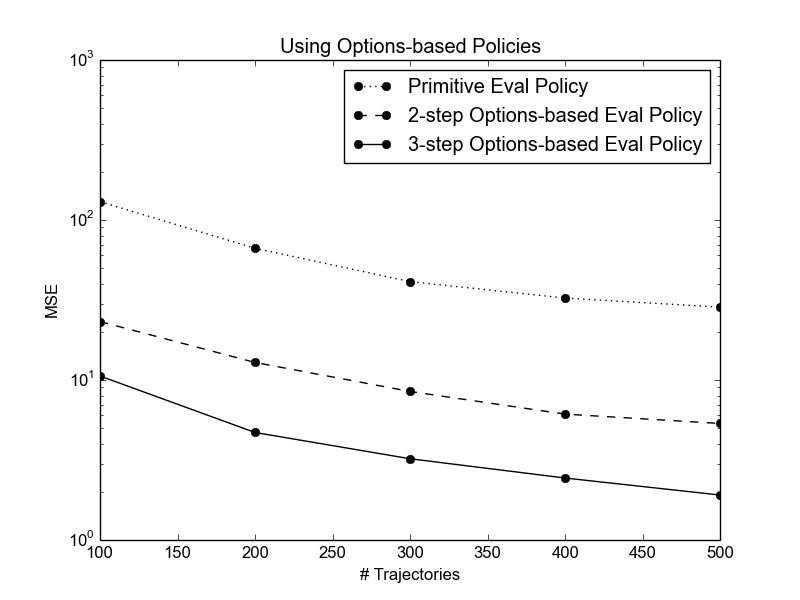}
\caption{Comparing the MSE of PDIS between primitive and options-based behavior and evaluation policy pairs. Note the y-axis is a log scale. Our results show that PDIS for the options-based evaluation policies are an order of magnitude better than PDIS for the primitive evaluation policy. Indeed, Corollary \ref{cor:case1} 
shows that the $n$-step options policies are effectively reducing 
the horizon by a factor of $n$ over the primitive policy. As 
expected, the options-based policies that use 3-step options have the lowest MSE.}
\end{SCfigure}

\section{Going Further with Options}

 Often options are used to achieve a specific sub-task in a domain. For example in a robot navigation task, there may be an option to navigate to a special fixed location. However one may realize that there is a faster way to navigate to that location, so one may change that option and try to evaluate the new policy to see whether it is actually better. In this case the old and new option are both always able to reach the special location; the only difference is that the new option could get there faster. In such a case we can further reduce the variance of PDIS. We now formally define this property.

\begin{definition}
Given behavior policy $\mu_b$ and evaluation policy $\mu_e$, an option $o$ is called \textbf{stationary}, if the distribution of the states on which $o$ terminates is always the same for $\mu_b$ and $\mu_e$. The underlying policy for option $o$ can differ for $\mu_b$ and $\mu_e$; only the termination state distribution is important.
\end{definition}

A stationary option may not always arise due to solving a sub-task. It can also be the case that a stationary option is used as a way to perform a soft reset. For example, a robotic manipulation task may want to reset arm and hand joints to a default configuration in order to minimize sensor/motor error, before trying to grasp a new object.

Stationary options allows us to point to a step in a trajectory where we know the state distribution is fixed. Because the state distribution is fixed, we can partition the trajectory into two parts. The beginning of the second partition would then have state distribution that is independent of the actions chosen in the first partition. We can then independently apply PDIS to each partition, and sum up the estimates. This is powerful because it can halve the effective horizon of the problem.

\begin{theorem}
\label{thm:special}
Let $\mu_b$ be an options-based behavior policy. Let $\mu_e$ be an options-based evaluation policy. Let $\mathcal{O}$ be the set of options that $\mu_b$ and $\mu_e$ use. The underlying policies of the options in $\mu_e$ may be arbitrarily different from $\mu_b$. 

Let $o_1$ be a stationary option.  We can decompose the expected value as follows. Let $\tau_1$ be the first part of a trajectory up until and including the first occurrence of $o_1$. Let $\tau_2$ be the part of the trajectory after the first occurrence of $o_1$ up to and including the first occurrence of $o_2$. Then
\begin{align}
\mathbb{E}_{\mu_e}(G(\tau)) &= \mathbb{E}_{\mu_b}(PDIS(\tau)) = \mathbb{E}_{\mu_b}(PDIS(\tau_1)) + \mathbb{E}_{\mu_b}(PDIS(\tau_2)) 
\end{align}
{\bf Proof.} See appendix.
\end{theorem}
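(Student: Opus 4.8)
The plan is to reduce the two claimed equalities to three ingredients: the unbiasedness of PDIS already granted by Theorem~\ref{thm:main}, the trivial additivity of returns $G(\tau)=G(\tau_1)+G(\tau_2)$, and a per-segment unbiasedness statement for $\tau_2$ that must be proved separately. The first equality $\mathbb{E}_{\mu_e}(G(\tau))=\mathbb{E}_{\mu_b}(PDIS(\tau))$ is exactly Theorem~\ref{thm:main} applied to the whole trajectory, so nothing new is needed there. For the second equality I would establish the two segment-wise identities $\mathbb{E}_{\mu_b}(PDIS(\tau_1))=\mathbb{E}_{\mu_e}(G(\tau_1))$ and $\mathbb{E}_{\mu_b}(PDIS(\tau_2))=\mathbb{E}_{\mu_e}(G(\tau_2))$, then add them and invoke $G(\tau)=G(\tau_1)+G(\tau_2)$ to recover $\mathbb{E}_{\mu_e}(G(\tau))$ on the left. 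The identity for $\tau_1$ is immediate: $\tau_1$ is itself a trajectory run from the initial state distribution up to a stopping time (the first occurrence of $o_1$), so PDIS is unbiased on it by the same argument as Theorem~\ref{thm:main}.

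The real work is the identity for $\tau_2$, and this is where the stationary-option hypothesis enters. Here $PDIS(\tau_2)$ must be read as the estimator whose importance weights are restarted at the beginning of $\tau_2$, i.e.\ at the state $s^\ast$ on which $o_1$ terminates. I would condition on $s^\ast$ and use the Markov property of the underlying MDP: given $s^\ast$, the segment $\tau_2$ is independent of $\tau_1$, and PDIS run on $\tau_2$ alone is conditionally unbiased, so $\mathbb{E}_{\mu_b}(PDIS(\tau_2)\mid s^\ast=s)=\mathbb{E}_{\mu_e}(G(\tau_2)\mid s^\ast=s)=:g_e(s)$. Taking the outer expectation over $s^\ast$ under $\mu_b$ gives $\mathbb{E}_{\mu_b}(PDIS(\tau_2))=\sum_s P_{\mu_b}(s^\ast=s)\,g_e(s)$, whereas the target is $\mathbb{E}_{\mu_e}(G(\tau_2))=\sum_s P_{\mu_e}(s^\ast=s)\,g_e(s)$. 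Stationarity of $o_1$ says precisely that $P_{\mu_b}(s^\ast=s)=P_{\mu_e}(s^\ast=s)$ for every $s$, so the two sums coincide and the identity follows.

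The step I expect to be the main obstacle is explaining \emph{why} the importance weight accumulated over $\tau_1$ is allowed to be dropped from the estimate on $\tau_2$. In the raw PDIS estimator on the full trajectory, every reward in $\tau_2$ is multiplied by the entire weight product from the first step, which factors as $W_{\tau_1}\cdot(\text{weights internal to }\tau_2)$; thus the full estimator really equals $PDIS(\tau_1)+W_{\tau_1}\cdot PDIS(\tau_2)$, carrying an extra factor $W_{\tau_1}$ that the claimed decomposition discards. I must argue this factor washes out in expectation, i.e.\ $\mathbb{E}_{\mu_b}(W_{\tau_1}\,PDIS(\tau_2))=\mathbb{E}_{\mu_b}(PDIS(\tau_2))$. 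Conditioning on $s^\ast$ and pulling out the $\tau_1$-measurable factor $W_{\tau_1}$, this reduces to $\mathbb{E}_{\mu_b}(W_{\tau_1}\mid s^\ast=s)=1$ for every reachable $s$, which is exactly the statement that reweighting by $W_{\tau_1}$ converts the $\mu_b$-law of $s^\ast$ into the $\mu_e$-law, made neutral by stationarity. Two points will need care: confirming that the termination state $s^\ast$ is a genuine MDP state carrying the Markov property (so the conditional independence of $\tau_2$ from $\tau_1$ survives the partial observability the paper allows for the policies themselves), and handling the stopping-time nature of the first occurrence of $o_1$ so that $\tau_1$ and $\tau_2$ are well defined and jointly exhaust the trajectory.
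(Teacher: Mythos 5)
Your proposal is correct and follows essentially the same route as the paper's proof: decompose the return additively over $\tau_1$ and $\tau_2$, condition on the terminating state $s^\ast$ of the stationary option via the law of total expectation, apply PDIS unbiasedness segment-wise (with a fixed starting state for $\tau_2$), and invoke stationarity to replace $\Pr_{\mu_e}(s^\ast = s)$ with $\Pr_{\mu_b}(s^\ast = s)$. Your additional verification that the discarded prefix weight satisfies $\mathbb{E}_{\mu_b}(W_{\tau_1} \mid s^\ast = s) = 1$ goes slightly beyond what the paper writes down (it anticipates the paper's later covariance-test viewpoint), but the core argument is the same.
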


Note that there are no conditions on how the probabilities over options may differ, nor on how the underlying policies of the non-stationary options may differ. This means that, regardless of these differences, the trajectories can be partitioned and PDIS can be independently applied. Furthermore, Theorem \ref{thm:main} can still be applied to each of the independent applications of PDIS. Combining Theorem \ref{thm:special} and Theorem \ref{thm:main} can lead to more ways of designing a desired evaluation policy that will result in a low variance PDIS estimate.


\section{Experiment 2: Stationary Options}

We now demonstrate Theorem \ref{thm:special} empirically on NoisyTaxi. In NoisyTaxi, we know that a primitive $\epsilon$-greedy policy will eventually 
pick up the passenger (though it may take a very long time depending on $\epsilon$). Since the starting location of the passenger is uniformly random, the location of the taxi immediately after picking up the passenger is also uniformly random, but over the four pickup locations. This implies that, regardless of the $\epsilon$ value 
in an $\epsilon$-greedy policy, we can view executing that $\epsilon$-greedy 
policy until the passenger is picked up as a new "PickUp-$\epsilon$" option that 
always terminates in the same state distribution.

Given this argument, we can use Theorem \ref{thm:special} to decompose any NoisyTaxi trajectory into the part before the passenger is picked up, and the part after the 
passenger is picked up, estimate the expected reward for each, and 
then sum. As picking up the passenger is often the halfway point in a trajectory (depending on the locations of the passenger and the destination), 
we can perform importance sampling over two, approximately half 
length, trajectories. 
More concretely, we consider two $n=1$ options (e.g. primitive action) 
$\epsilon$-greedy policies. Like in the prior subsection, the behavior 
policy has $\epsilon=0.3$ and the evaluation policy has $\epsilon=0.05$.
We compare performing normal PDIS to estimate the value of the evaluation 
policy to estimating it using partitioned-PDIS using Theorem \ref{thm:special}. See Figure \ref{fig:graph2} for results.

\begin{SCfigure}[2][h]
\label{fig:graph2}
\centering
\includegraphics[scale=0.3]{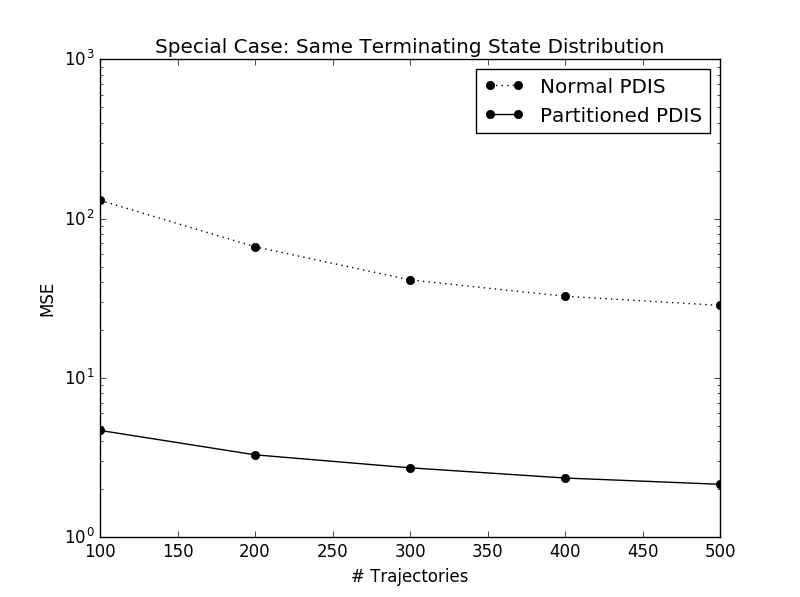}
\caption{Comparing MSE of Normal PDIS and PDIS that uses Theorem \ref{thm:special}. We gain an order of an order of magnitude reduction in MSE (labeled Partitioned-PDIS). Note this did not require that the primitive policy used options: we merely used the fact that if there are subgoals in the domain where the agent is likely to go through with a fixed state distribution, we can leverage that to decompose the value of a long horizon into the sum over multiple shorter ones. Options is one common way this will occur, but as we see in this example, this can also occur in other ways. 
}
\end{SCfigure}

\section{Covariance Testing}

The special case of stationary options can be viewed as a form of dropping certain importance sampling weights from the importance sampling estimator. With stationary options, the weights before the stationary options are dropped when estimating the rewards thereafter. By considering the bias incurred when dropping weights, we derive a general rule involving covariances as follows. Let $W_1 W_2 r$ be the ordinary importance sampling estimator for reward $r$ where the product of the importance sampling weights are partitioned into two products $W_1$ and $W_2$ using some general partitioning scheme such that $\mathbb{E}(W_1) = 1$. Note that this condition is satisfied when $W_1,W_2$ are chosen according to commonly used schemes such as fixed timesteps (not necessarily consecutive) or fixed states, but can be satisfied by more general schemes as well. Then we can consider dropping the product of weights $W_1$ and simply output the estimate $W_2 r$:
\begin{align}
\mathbb{E}(W_1 W_2 r) &= \mathbb{E}(W_1)\mathbb{E}(W_2 r) + \cov(W_1, W_2 r) \\
&= \mathbb{E}(W_2 r) + \cov(W_1, W_2 r) \label{eq:covtest}
\end{align}
This means that if $\cov(W_1, W_2 r) = 0$, then we can drop the weights $W_1$ with no bias. Otherwise, the bias incurred is $\cov(W_1, W_2 r)$. Then we are free to choose $W_1,W_2$ to balance the reduction in variance and the increase in bias.

\subsection{\alglong~(\algshort)}

Using the Covariance Test (eqn \ref{eq:covtest}) idea, we propose a new importance sampling algorithm called \alglong~(\algshort). This is a variant of PDIS where for a reward $r_t$, we try to drop all but the most recent $k$ importance sampling weights, using the covariance test to optimize $k$ in order to lower MSE.

Let $\pi_b$ and $\pi_e$ be the behavior and evaluation policies respectively (they may or may not be options-based policies). Let $D = \{ \tau^{(1)}, \tau^{(2)}, \dots, \tau^{(n)} \}$ be our historical data set generated from $\pi_b$ with $n$ trajectories of length $H$. Let $\rho_t = \frac{\pi_e(a_t | s_t)}{\pi_b(a_t | s_t)}$. Let $\rho^{(i)}_t$ be the same but computed from the $i$-th trajectory. Suppose we are given estimators for covariance and variance. See algorithm \ref{alg:covtest} for details.

\begin{algorithm}[h]
	\begin{algorithmic}[1]
 \STATE \textbf{Input:} $D$
 \FOR{$t=1$ to $H$}
 	\FOR{$k=0$ to $t$}
    \STATE $A_k = \prod_{j=1}^{t-k} \rho_j$
    \STATE $B_k = \prod_{j=t-k+1}^{t} \rho_j$
    \STATE Estimate $\cov(A_k, B_k r_t)$ and denote $\widehat{C}_k$
    \STATE Estimate $\var(B_k r_t)$ and denote $\widehat{V}_k$
    \STATE Estimate MSE with $\widehat{MSE}_k = \widehat{C}_k^2 + \widehat{V}_k$
    \ENDFOR
    \STATE $k' = \operatorname{argmin}_{k} \widehat{MSE}_k$
    \STATE Let $\widehat{r_t} = \frac{1}{n} \sum_{i=1}^n B_k^{(i)} r_t$
 \ENDFOR
   \STATE \textbf{Return} $\sum_{t=1}^H \widehat{r_t}$
   \end{algorithmic}
 \caption{\algshort}
 \label{alg:covtest}
\end{algorithm}


\subsection{Strong Consistency}

In the appendix, we provide a proof that \algshort~is strongly consistent. We now give a brief intuition for the proof. As $n$ goes to infinity, the estimates for the MSE get better and better and converge to the bias. We know that if we do not drop any weights, we get an unbiased estimate and thus the smallest MSE estimate will go to zero. Thus we get more and more likely to pick $k$ that correspond to unbiased estimates.

\section{Experiment 3: \alglong}

To evaluate \algshort, we constructed a simple MDP that exemplifies to properties of domains for which we expect \algshort\ to be useful. 
Specifically, we were motivated by the applications of reinforcement learning methods to type 1 diabetes treatments \citep{Bastani2014,Thomas2017} and digital marketing applications \citep{Theocharous2015}. 
In these applications there is a natural place where one might divide data into episodes: for type 1 diabetes treatment, one might treat each day as an independent episode, and for digital marketing, one might treat each user interaction as an independent episode. 

However, each day is not actually independent in diabetes treatment---a person's blood sugar in the morning depends on their blood sugar at the end of the previous day. Similarly, in digital marketing applications, whether or not a person clicks on an ad might depend on which ads they were shown previously (e.g., someone might be less likely to click an ad that they were shown before and did not click on then). 
So, although this division into episodes is reasonable, it does not result in episodes that are completely independent, and so importance sampling will not produce consistent estimates (or estimates that can be trusted for high-confidence off-policy policy evaluation \citep{Thomas2015}). 
To remedy this, we might treat all of the data from a single individual (many days, and many page visits) as a single episode, which contains nearly-independent subsequences of decisions.

To model this property, we constructed an MDP with three states, $s_1,s_2,$ and $s_3$ and two actions, $a_1$ and $a_2$. The agent always begins in $s_1$, where taking action $a_1$ causes a transition to $s_2$ with a reward of $+1$ and taking action $a_2$ causes a transition to $s_3$ with a reward of $-1$. In $s_2$, both actions lead to a terminal absorbing state with reward $-2+\epsilon$, and in $s_3$ both actions lead to a terminal absorbing state with reward $+2$. For now, let $\epsilon = 0$. 
This simple MDP has a horizon of $2$ time steps---after two actions the agent is always in a terminal absorbing state. 
To model the aforementioned examples, we modified this simple MDP so that whenever the agent would transition to the terminal absorbing state, it instead transitions back to $s_1$. 
After visiting $s_1$ fifty times, the agent finally transitions to a terminal absorbing state. 
Furthermore, to model the property that the fifty sub-episodes within the larger episode are not completely independent, we set $\epsilon=0$ initially, and $\epsilon = \epsilon + 0.01$ whenever the agent enters $s_2$. 
This creates a slight dependence across the sub-episodes.

For this illustrative domain, we would like an importance sampling estimator that assumes that sub-episodes are independent when there is little data in order to reduce variance. 
However, once there is enough data for the variances of estimates to be sufficiently small relative to the bias introduced by assuming that sub-episodes are independent, the importance sampling estimator should automatically begin considering longer sequences of actions, as \algshort\ does.
We compared \algshort\ to ordinary importance sampling (IS), per-decision importance sampling (PDIS), weighted importance sampling (WIS), and consistent weighted per-decision importance sampling (CWPDIS). The behavior policy selects actions randomly, while the evaluation policy selects action $a_1$ with a higher probability than $a_2$. 
In Figure \ref{fig:toyResults} we report the mean squared errors of the different estimators using different amounts of data. 

\begin{SCfigure}[2][h]
\label{fig:toyResults}
\centering
\includegraphics[scale=0.3]{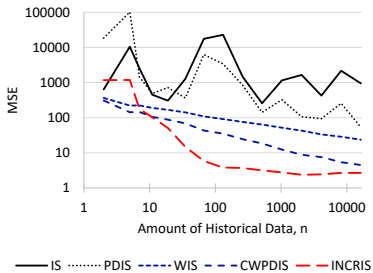}
\caption{Performance of different estimators on the simple MDP that models properties of the diabetes treatment and digital marketing applications. The reported mean squared errors are the sample mean squared errors from $128$ trials. Notice that \algshort\ achieves an order of magnitude lower mean squared error than all of the other estimators, and for some $n$ it achieves two orders of magnitude improvement over the unweighted importance sampling estimators. 
}
\end{SCfigure}

\section{Conclusion}

We have shown that using options-based behavior and evaluation policies allow for lower mean squared error when using importance sampling due to their structure. Furthermore, special cases may naturally arise when using options, such as when options terminate in a fixed state distribution, and lead to greater reduction of the mean squared error.

We examined options as a first step, but in the future these results may be extended to full hierarchical policies (like the MAX-Q framework). We also generalized naturally occurring special cases with covariance testing that leads to dropping out weights in order to improve importance sampling predictions. We showed an instance of covariance testing in the algorithm \algshort, which can greatly improve estimation accuracy for a general class of domains, and hope to derive more powerful estimators based on covariance testing that can apply to even more domains in the future.

\newpage

\section*{Acknowledgements}

The research reported here was supported in part by an ONR Young Investigator award, an NSF CAREER award, and by the Institute of Education Sciences, U.S. Department of Education. The opinions expressed are those of the authors and do not represent views of NSF, IES or the U.S. Dept. of Education.

\bibliography{main}
\bibliographystyle{plainnat}

\appendix

\section{Proof of Theorem 1}

Because PDIS is an unbiased estimator of an evaluation policy's performance, its MSE is equal to its variance. To prove the theorem statement, we provide an existence proof by constructing a sample MDP where, given a particular behavior policy, there is an evaluation policy whose estimate under PDIS will have a variance that scales exponentially with the horizon $H$.

Consider a discrete state and action Markov decision process. The horizon is $H$ and the MDP has $2H + 1$ states and 2 actions. The states form two chains: the top chain has length $H+1$ and the bottom chain has length $H$. Label the states of the top chain as $x_1, \dots, x_{H+1}$, and the states of the bottom chain be $y_1, \dots, y_H$. The start state is $x_1$. An episode halts after $H$ steps.
The two actions are $a_1,a_2$. Taking action $a_1$ in the top chain deterministically transitions to the next state in the top chain i.e. from $x_i$ to $x_{i+1}$. Taking action $a_2$ in the top chain deterministically transitions to the corresponding state in the bottom chain i.e. from $x_i$ to $y_i$. The reward is zero everywhere except a reward of 1 is received for executing action $a_1$ at state $x_H$. The optimal policy is to always take action $a_1$.
  
Let the behavior policy $\pi_b$ be uniformly random i.e. there is always a probability of $1/2$ of picking either action. The evaluation policy is the optimal policy, $\pi_e(s)=a_1$ for all states. 

Since the only nonzero reward is the single reward of $1$ at $x_H$ for action $a_1$, and it is only possible to reach that state by taking action $a_1$ for $H$ steps, PDIS reduces to a sum only over trajectories consisting solely of $H$ steps of action $a_1$, whose weights are 
$\rho = \prod_{u=1}^H \frac{\pi_e(a_1 | s^{(i)}_u)}{\pi_b(a_1 | s^{(i)}_u)} = 2^H$.
The PDIS estimate of the evaluation policy is a scaled Binomial distribution where with probability $p=\frac{1}{2^H}$ a trajectory's weighted return is $2^H$ and zero otherwise. 
Thus the variance of the PDIS estimate of $\pi_e$ is $\frac{1}{n} (2^H - 1)$ for $n$ trajectories, which is $\Omega(2^H)$ .

\section{Proof of Theorem 2}

We prove the above statement by constructing a MDP and selecting a behavior and target policy which will result in the stated MSE dependence on the horizon. We consider the same MDP, $\pi_b$, and $\pi_e$ as used in the proof above. For this particular MDP, the only weight that matters is the weight associated with the single final reward of the correct trajectory, so per-decision importance sampling and ordinary importance sampling are equivalent. 

If the optimal trajectory does not appear in the historical data, then WIS is undefined. This is because the weight of any nonoptimal trajectory is 0, so dividing by the sum of the weights is undefined. However if we change $\pi_e$ from deterministically picking $a_1$ to picking $a_1$ with arbitrarily high probability, then the weights of nonoptimal trajectories will be arbitrarily close to zero, resulting in a WIS estimate of 0. Thus we define WIS to estimate a value of 0 when the optimal trajectory does not appear in the data. Any optimal trajectory that appears in your data will have a weight of $2^H$. Then because the weights of nonoptimal trajectories are 0, the WIS estimate will be exactly 1. Thus as soon as WIS sees at least one correct trajectory it will have the perfect estimate, otherwise the estimate will be 0. The WIS estimate is a Bernoulli distribution where the probability of 1 is the probability of at least one optimal trajectory appearing in the data.

Since the WIS estimate is Bernoulli, its variance is bounded by a constant. Furthermore the variance is small. Thus we take a closer look at the bias, since MSE is the sum of the variance and bias squared. First we compute the probability the WIS returns 1. This is the probability of at least one optimal trajectory appearing, which is equivalent to one minus the probability of no optimal trajectory appearing: $1 - \left(1- \frac{1}{2^H} \right)^n$. Thus the expected value of the WIS estimate is $1 - \left(1- \frac{1}{2^H} \right)^n$. Then the bias is $\left(1- \frac{1}{2^H} \right)^n$. Let the bias be $B$. We will compute how much data is needed to compensate for the increase in the bias when $H$ increases. Rearranging and solving for $n$ (using a taylor approximation) we get $n = \frac{\log B}{\log \left(1- \frac{1}{2^H} \right)} \approx \frac{\log B}{- \frac{1}{2^H}} \approx \Omega(2^H)$. Thus we need an exponential number of trajectories to compensate for the increase in bias when the horizon $H$ is increased. Since MSE consists partly of biased squared, we would need even more data to compensate for a squared increase in bias, but for simplicity we still use an exponential bound.

\section{Proof of Theorem 4}

Let $t^*$ be the timestep when $o_1$ terminates. Then
\begin{align}
& \mathbb{E}_{\mu_e}(J(\tau)) \\
&= \mathbb{E}_{\mu_e}\left( J(\tau_1) + J(\tau_2) \right) \\
&= \mathbb{E}_{\mu_e} (J(\tau_1)) + \mathbb{E}_{\mu_e} (J(\tau_2)) \\
&= \mathbb{E}_{\mu_e} (J(\tau_1)) + \mathbb{E}_{\mu_e} \left( \mathbb{E}_{\mu_e} (J(\tau_2) | s_{t^*} = s) \right) \label{eqn:totalexp}\\
&= \mathbb{E}_{\mu_e} (J(\tau_1)) + \sum_{s \in \mathcal{S}} \Pr(s_{t^*} = s | \mu_e) \left( \mathbb{E}_{\mu_e} (J(\tau_2) | s_{t^*} = s) \right) \\
&= \mathbb{E}_{\mu_b} (PDIS(\tau_1)) + \sum_{s \in \mathcal{S}} \Pr(s_{t^*} = s | \mu_e) \left( \mathbb{E}_{\mu_b} (PDIS(\tau_2) | s_{t^*} = s) \right) \label{eqn:PDIS} \\
&= \mathbb{E}_{\mu_b} (PDIS(\tau_1)) + \sum_{s \in \mathcal{S}} \Pr(s_{t^*} = s | \mu_b) \left( \mathbb{E}_{\mu_b} (PDIS(\tau_2) | s_{t^*} = s) \right) \label{eqn:sameterm} \\
&= \mathbb{E}_{\mu_b} (PDIS(\tau_1)) + \mathbb{E}_{\mu_b} \left( \mathbb{E}_{\mu_b} (PDIS(\tau_2) | s_{t^*} = s) \right)\\
&= \mathbb{E}_{\mu_b} (PDIS(\tau_1)) + \mathbb{E}_{\mu_b} (PDIS(\tau_2))  \label{eqn:totalexp2}
\end{align}

where eqn \ref{eqn:totalexp} follows from the law of total expectation, eqn \ref{eqn:PDIS} follows from using PDIS with a fixed initial state distribution $s_{t^*}=s$, eqn \ref{eqn:sameterm} follows because $s_{t^*}$ is the terminating state for option $o_{1}$ whose terminating state distribution stayed the same between $\mu_b$ and $\mu_e$, and eqn \ref{eqn:totalexp2} follows from the law of total expectation.

\section{Strong Consistency of \algshort}

Given strongly consistent estimators for covariance and variance (e.g. sample covariance and sample variance), we show that \algshort~is consistent by showing that $\sum_{t=1}^H \widehat{r_t} \asure \mathbb{E}_{\pi_e}\left( \sum_{t=1}^H r_t \right)$, i.e. the total expected value under the evaluation policy. Since we have a finite sum, it is sufficient to show that for all $t$, $\widehat{r_t} \asure \mathbb{E}_{\pi_e}(r_t)$.

For any $k$, because we have strongly consistent covariance and variance estimators, as $n \rightarrow \infty$ we have that $\widehat{V}_k \asure \var(B_k r_t)$ and since $\var(B_k r_t)$ converges to zero, we also have that $\widehat{V}_k \asure 0$. Similarly, $\widehat{C}_k \asure \cov(A_k, B_k r_t)$. Therefore $\widehat{MSE}_k \asure \left( \cov(A_k, B_k r_t) \right)^2$.

By eqn. \ref{eq:covtest} we have that $\cov(A_k, B_k r_t) = \mathbb{E}(A_k B_k r_t) - \mathbb{E}(B_k r_t)$, which is the bias of $\mathbb{E}(A_k r_t)$.

Let $K^* = \{ k | \cov(A_k, B_k r_t) = 0 \}$. Notice that $t \in K^*$ since $\widehat{r_t} = \frac{1}{n} \sum_{i=1}^n \prod_{j=1}^{t} \rho^{(i)}_j r_t$ is the ordinary importance sampling estimator, which is unbiased.

We want to show that as $n \rightarrow \infty$, the algorithm eventually picks $k' \in K^*$ and so $\widehat{r}_t$ is an unbiased estimate. To do so, let $(\Omega, \Sigma, p)$ be the probability space. Since for all $k$, $\widehat{MSE}_k \asure \left( \cov(A_k, B_k r_t) \right)^2$, then $p(G) = 1$ where $G = \{ \omega \in \Omega | \forall k \lim_{n \rightarrow \infty} \widehat{MSE}_k = \left( \cov(A_k, B_k r_t) \right)^2 \}$.

Now we can restrict our focus to only events $\omega \in G$. First, let
\[\epsilon_{gap} = \min_{k | \cov(A_k, B_k r_t) > 0} \left( \cov(A_k, B_k r_t) \right)^2\]
which is the smallest nonzero MSE. Since $\lim_{n \rightarrow \infty} \widehat{MSE}_k = \left( \cov(A_k, B_k r_t) \right)^2$, by definition of limit, we have that there exists $n_0$ such that for all $n \geq n_0$,  $|\widehat{MSE}_k - \left( \cov(A_k, B_k r_t) \right)^2| < \frac{\epsilon_{gap}}{3}$. By the definition of $\epsilon_{gap}$, for $n \geq n_0$, $k' \in K^*$.

We have shown that the algorithm eventually picks $k' \in K^*$. Next we will show that this implies $\widehat{r_t} \asure \mathbb{E}_{\pi_e}(r_t)$.

By the strong law of large numbers, for any $k$, $\left( \frac{1}{n} \sum_{i=1}^n B_k^{(i)} r_t \right) \asure \mathbb{E}(B_{k} r_t)$. Then we know that $p(G') = 1$ where $G' = \{ \omega \in \Omega | \lim_{n \rightarrow \infty} \left( \frac{1}{n} \sum_{i=1}^n B_{k}^{(i)} r_t \right) = \mathbb{E}(B_{k} r_t) \}$. Then $p(G \cap G') = 1$, and we can restrict our focus to $\omega \in (G \cap G')$. We have already shown that when $\omega \in (G \cap G')$, there exists $n_0$ such that for all $n \geq n_0$, $k' \in K^*$. But we also know that when $\omega \in (G \cap G')$, $\lim_{n \rightarrow \infty} \left( \frac{1}{n} \sum_{i=1}^n B_{k'}^{(i)} r_t \right) = \mathbb{E}(B_{k'} r_t)$ for any $k'$, so for $k' \in K^*$, $\lim_{n \rightarrow \infty} \left( \frac{1}{n} \sum_{i=1}^n B_{k'}^{(i)} r_t \right) = \mathbb{E}(B_{k'} r_t) = \mathbb{E}(B_{t} r_t) = \mathbb{E}_{\pi_e}(r_t)$. Therefore, for all $\epsilon > 0$, we will always be able to find some $n_0$ large enough such that for all $n \geq n_0$, when $\omega \in (G \cap G')$, we have $k' \in K^*$ and $\widehat{r_t} = \left( \frac{1}{n} \sum_{i=1}^n B_{k'}^{(i)} r_t \right)$ and therefore $|\widehat{r_t} - \mathbb{E}(B_{t} r_t)| < \epsilon$.

Thus we have shown $\widehat{r_t} \asure \mathbb{E}_{\pi_e}(r_t)$, and so \algshort~is strongly consistent.

\end{document}